\definecolor{lavender}{rgb}{0.9, 0.9, 0.98}
\newcommand\LyonsStrikeout{\bgroup\markoverwith
{\textcolor{cyan}{\rule[0.5ex]{2pt}{0.8pt}}}\ULon}
\def\BState{\State\hskip-\ALG@thistlm}
\newtheorem{theorem}{Theorem}
\newtheorem{problem}{Problem}
\newtheorem{remark}{Remark}
\DeclareMathOperator*{\minimize}{\text{minimize}}
\DeclareMathOperator*{\argmin}{arg\,min}
\title{Distributed Lloyd-based algorithm for uncertainty-aware multi-robot
under-canopy flocking}
\author{{Manuel Boldrer, V\'it  Kr\'atk\'y, Viktor Walter, Martin Saska} 
\thanks{Manuscript received: September, 3, 2025; Revised December, 4, 2025;
Accepted February, 27, 2026.}
   \thanks{Manuel Boldrer,  V\'it  Kr\'atk\'y, Viktor Walter, and Martin Saska are with the Department of Cybernetics, Czech  Technical University in Prague, Karlovo namesti 13, 121 35 Prague 2, Czechia, {\tt
       \{name.surname\}@fel.cvut.cz}.}
       \thanks{This paper was recommended for publication by Editor Giuseppe
       Loianno upon evaluation of the Associate Editor and Reviewers’ comments.
       This work was funded by the Czech Science Foundation (GAČR) under
       research project no. 23-07517S, by the European Union under the project
       Robotics and advanced industrial production (reg. no.
       CZ.02.01.01/00/22\_008/0004590).} }
\titlespacing*{\section}{0pt}{*1}{*0.5}
\titlespacing*{\subsection}{0pt}{*0.8}{*0.4}
\begin{document}

\maketitle

\begin{abstract} 
  In this letter, we present a distributed algorithm for flocking in complex
  environments that operates at constant altitude, without explicit
  communication, no a priori information about the environment, and by using
  only on-board sensing and computation capabilities. We provide sufficient
  conditions to guarantee collision avoidance with obstacles and other robots
  without exceeding a desired maximum distance from a predefined set of
  neighbors (flocking or proximity maintenance constraint) during the mission.
  The proposed approach allows to operate in crowded scenarios and to explicitly
  deal with tracking errors and on-board sensing errors. The algorithm was
  verified through simulations with varying number of UAVs and also through
  numerous real-world experiments in a dense forest involving up to four UAVs.
\end{abstract} 

  \begin{IEEEkeywords} Multi-robot
systems, Distributed control, Lloyd-based algorithms. 
  \end{IEEEkeywords}

{\small{\textbf{\textit{Video---}\url{https://mrs.fel.cvut.cz/rbl}}}}

{\small{\textbf{\textit{Github---}\url{https://github.com/ctu-mrs/rbl}}}}  


\IEEEpeerreviewmaketitle

\section{Introduction} \label{sec:Introduction} Over the past few decades,
numerous applications have emerged for multi-robot systems, exhibiting their
undisputed benefits in diverse fields. These applications span domains such as
logistics, surveillance, environment monitoring, disaster response, search and
rescue, and agriculture among others, demonstrating their versatility. For many
of these applications, having a swarm of robots able to flock together towards a
desired location can be highly beneficial, due to redundancies and
parallelization. Depending on the environments where robots operate, the
reliability or availability of explicit communication between robots may be
compromised. 

In this work, we propose a Lloyd-based algorithm that relies solely on
on-board sensing. We have synthesized a distributed algorithm in which each
robot safely reaches its desired position while maintaining proximity to its
teammates in cluttered environments. We choose Lloyd algorithm as a base for our
method as it enables seamless multi-robot coordination without requiring robots
to share unobservable or global information, and its safety guarantees are not
dependent on finely tuned parameters.

Our algorithm takes inspiration from the nature; just like flock of birds
or a school of fish, the robots do not need a leader and each robot makes its
own decisions based on relative information, without relying on explicit
communication. We validate our approach through simulations and experiments
in the forest (see Figure~\ref{fig:figure}).
\begin{figure}
     \centering
\includegraphics[width=1.00\columnwidth]{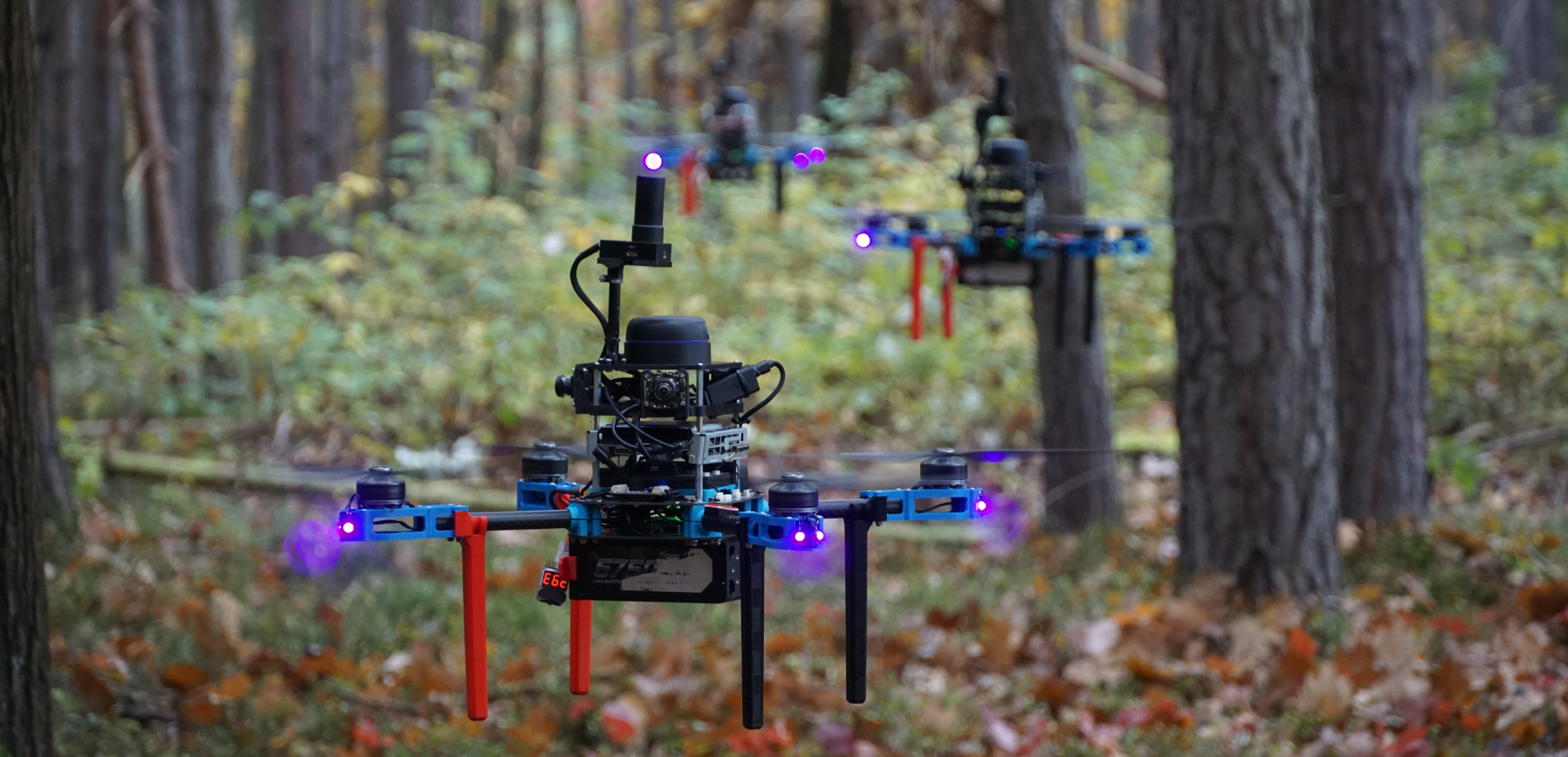}
    \caption{Deployment of the proposed algorithm in a real-world scenario with
    $3$ UAVs navigating in a forest.} \label{fig:figure}
\end{figure}

\subsection{Related works}
\label{sec:Introduction-a}
Prior literature provides a considerable amount of algorithms for distributed
motion coordination of multiple robots, but only a few of them can manage to
work in a cluttered environment by also imposing the flocking constraint. Among
them, a smaller subset is devoted to applications in the field without relying
on explicit communication between the robots. 

To position our work within the state of the art, we consider the
following features.
    \textit{Distributed control}: each robot makes decisions on the basis of
      local information, which is obtained by relying only on
      on-board sensors.
    \textit{Asynchronous control}: each robot makes decisions independently,
    in asynchronous fashion. 
    \textit{Without explicit communication}: the robots do not exchange 
    time-dependent information such as future trajectories, current positions or
    velocities. Moreover, they do not rely on a common reference frame.
   \textit{Safety}:  collision and obstacle avoidance is
    guaranteed at any time.
     \textit{Proximity maintenance}: the distance between robots is guaranteed
     to be below the set threshold on maximum distance.
  \textit{Uncertainties}: the approach is robust to bounded tracking
  errors and bounded measurement uncertainties.
  \textit{Cluttered and highly dense environments}: the algorithm can operate
     in cluttered environments such as a forest.
     \textit{On-board localization}: the algorithm is verified in unknown
    environments relying only on on-board localization. 
     \textit{Experiments}: the algorithm is tested on real robotic platforms
    in unstructured environments. \textit{Open source code}:
    the algorithm is open source.
    \begin{small}
\begin{table*} \centering \renewcommand{\tabcolsep}{0.035cm}
\renewcommand{\arraystretch}{1.0}
  \begin{tabular}{|c|c|c|c|c|c|c|c|c|c|c|c|c|c|c|c|c|c|c|c|c|c|c|c|c|c|c|} \hline
    \bf{Method}  & \cite{boldrer2023rule}&  \cite{boldrer2022unified}&
    \cite{zhou2022swarm}  & \cite{zhou2021ego} & \cite{zhu2024swarm} &
    \cite{ahmad2022pacnav} & \cite{petravcek2020bio}
    &\cite{boldrer2020socially}
    &\cite{zhu2022decentralized} & \cite{ferranti2022distributed} &
    \cite{adajania2023amswarm} & \cite{boldrer2020lloyd}&
    \cite{vasarhelyi2014outdoor} & \cite{vasarhelyi2018optimized} &
    \cite{soares2015distributed} &\cite{wasik2016graph}&
    \cite{mezey2024purely}& \cite{hauert2011reynolds}&
    \cite{quintero2013flocking}& \cite{turpin2012decentralized}&
    \cite{toumieh2024high} & \cite{bartolomei2023fast}& \cite{tordesillas2021mader}
     & \cite{soria2021predictive}&\cite{park2025decentralized} &ours\\ \hline Distributed & \checkmark &\checkmark
    & $\circ$ & $\circ$& $\circ$ &\checkmark & \checkmark & \checkmark & \checkmark &\checkmark  &
    o& \checkmark & \checkmark &\checkmark & \checkmark& \checkmark &\checkmark
    & \checkmark & \checkmark & \checkmark & $\circ$
    &\checkmark&\checkmark&$\times$&\checkmark& \checkmark\\ \hline
    Asynchronous  &  \checkmark &  \checkmark & $\circ$ & $\circ$ & $\circ$ & \checkmark & \checkmark
    & $\times$ & \checkmark  & \checkmark & $\times$  & \checkmark & \checkmark & \checkmark
    &\checkmark &\checkmark& \checkmark&\checkmark&\checkmark&$\times$&$\times$&
    \checkmark&\checkmark&$\times$&\checkmark&\checkmark\\
    \hline w/o Comm.  & $\circ$ & $\times$ & $\times$ & $\times$ & $\times$ & \checkmark & \checkmark & $\times$ & $\circ$ & $\times$ & $\times$
    & $\times$ & $\times$ & $\times$ &\checkmark & $\times$ &\checkmark&
    $\times$ & $\times$ & $\times$ &$\times$&$\times$&$\times$&$\times$&$\times$&\checkmark\\ \hline
    Safety  & \checkmark & \checkmark & \checkmark & \checkmark & \checkmark &
    $\circ$ & $\circ$  & $\circ$ & \checkmark & \checkmark & \checkmark & \checkmark & $\circ$ & $\circ$ & $\circ$
    & $\circ$& $\circ$&$\circ$&
    $\circ$&\checkmark&\checkmark&$\times$&\checkmark&\checkmark&\checkmark&\checkmark \\ \hline Uncertainties  & $\circ$ & $\circ$ & $\circ$ & $\circ$&$\circ$ & $\circ$
    & $\circ$ & $\circ$ & \checkmark & $\circ$ & $\circ$& $\circ$ & $\circ$ &
    $\circ$ & $\circ$ & $\circ$ & $\circ$ &$\circ$
    &$\circ$&$\circ$&$\circ$&$\times$&$\circ$&$\circ$&$\circ$&\checkmark \\
    \hline Prox. Maint.  & $\times$ & \checkmark & \checkmark & $\times$ & $\circ$ & $\circ$ & $\circ$ & $\circ$ & $\times$ & $\times$
    & $\times$ & $\circ$ & $\circ$ & $\circ$ & $\circ$ & $\circ$ & $\circ$
    &$\circ$& $\circ$&\checkmark&$\times$&$\times$&$\times$&$\circ$&$\times$&\checkmark \\ \hline Clutt.
    Envir.  & $\times$ & \checkmark & \checkmark & \checkmark & \checkmark & $\circ$ & $\circ$ &
    \checkmark & \checkmark & $\times$ & \checkmark &\checkmark & $\times$ &\checkmark &$\times$
    &\checkmark
    &$\times$&$\times$&$\times$&$\times$&\checkmark&\checkmark&\checkmark&\checkmark&\checkmark&\checkmark \\ \hline On-board Loc. & $\times$ &$\times$
    &\checkmark & \checkmark & \checkmark &\checkmark & \checkmark &$\times$ &$\times$ &$\times$ &$\times$
    &$\times$ &$\times$ &$\times$ &$\times$ &$\times$ &$\times$ &$\times$
    &$\times$ &$\times$ &$\times$ &$\times$&$\times$ &$\times$&$\times$&\checkmark\\ \hline Experiments  &
    \checkmark & \checkmark & \checkmark & \checkmark&\checkmark & \checkmark &
    \checkmark & $\times$ & \checkmark & \checkmark & \checkmark & $\circ$ & \checkmark
    &\checkmark & \checkmark &\checkmark &\checkmark &\checkmark &\checkmark
    &$\circ$& $\circ$&$\times$&\checkmark&\checkmark&\checkmark&\checkmark \\ \hline Open Source Code & \checkmark & \checkmark & \checkmark
    & \checkmark & \checkmark& \checkmark & \checkmark & $\times$ & $\times$ & $\times$ & \checkmark &
    $\times$ & $\times$ & $\times$ & $\times$ & $\times$ & \checkmark & $\times$
    & $\times$&$\times$&\checkmark&\checkmark&\checkmark&\checkmark&\checkmark&\checkmark \\ \hline
  \end{tabular}
\vspace{10pt} 
  \caption{List of state-of-the-art algorithms. We indicate with
``$\times$'' if the algorithm does not implement the feature as described in
  Section~\ref{sec:Introduction-a}, ``\checkmark'' if it
does, and  ``o'' if it does only partially.}
\label{tab:sota} \end{table*}  \end{small}

In Table~\ref{tab:sota}, we report an overview of the most closely related
algorithms in the literature. We mark with ``$\times$'' if the algorithm does
not implement the feature, ``\checkmark'' if it does, and finally, ``o'' if it
does but only partially.  As shown in Table~\ref{tab:sota}, our algorithm is
unmatched in fulfilling all the specified features of interest. In fact,
with respect to~\cite{boldrer2023rule}, we included the proximity
maintenance constraints, the ability to deal with cluttered environments,
tracking errors and uncertainties, without relying on explicit communication,
nor an external localization system. Despite~\cite{zhou2021ego,zhou2022swarm}
proposing valid solutions to the problem, their main limitations are the
requirement of a trajectory broadcasting network and the absence of account of proximity maintenance, tracking errors and uncertainties. Similarly,
in~\cite{zhu2024swarm} the robots need to broadcast information such as poses,
velocities, poses covariance, and global extrinsic transformations. On the other
hand,~\cite{ahmad2022pacnav,petravcek2020bio}, do not rely on an explicit
communication network, but they exhibit degraded performance in crowded
scenarios, lack of safety guarantees, proximity maintenance, and robustness to
tracking errors and uncertainties. In~\cite{zhu2022decentralized}, the authors
present a distributed solution, which in theory does not rely on explicit
communication and accounts for uncertainties. However, in the experiments they
rely on motion capture measurements adding limited noise errors, which is
reasonable only in structured environment (i.e., laboratory conditions). With
respect to~\cite{boldrer2022unified,boldrer2021graph}, where the authors
introduced the flocking behavior in the Lloyd-based framework, we do not impose
the line-of-sight constraints, i.e., two robots are considered in the proximity
if they are close enough. This allows to increase the robots mobility and also
to not rely on explicit communication. In Table~~\ref{tab:sota}, we report
additional relevant state-of-the-art algorithms. Notably, all of them share
limitations related to the use of an explicit communication network and/or to
not account for safety, proximity maintenance and the uncertainties.
Conversely, in our work, we provide a method which implements all the
fore-mentioned features as well as a comprehensive rule for the selection of the
algorithm's parameters based on measurable quantities (e.g., tracking error, or
covariance of state estimates), which is often an unclear or empirical process
in the state-of-the-art algorithms.

\subsection{Contribution and paper organization} Our contribution is threefold.
First, we propose a novel algorithm for flocking control in dense cluttered
environments. The key novelties lie in our use of an adaptive control strategy
to compensate for tracking errors and measurement uncertainties, combined with
Convex Weighted Voronoi Diagrams (CWVD) to enhance the performance in a dense
forest scenarios. Second, we provide sufficient conditions for safety and
proximity maintenance in cluttered environments and in the presence of flocking
constraints. Finally, we validate the algorithm in the forest, relying only on
on-board sensors, i.e., Inertial Measurement Unit (IMU), 2-D Lidar, altitude
sensor, and UVDAR system~\cite{walter2019uvdar}, without relying neither on an
explicit communication network nor GNSS signals nor having a common
reference frame.


\section{Problem description}
\label{sec:problem description}
Let us denote the positions of the robots with $p= [p_1,\dots, p_N]^\top$, where $p_i
= [x_i,y_i]^\top$ and the obstacle positions by $o = [o_1,\dots,o_{N_o}]^\top$,
$o_k = [x_k,y_k]^\top$. The mission space is represented by $\mathcal{Q} \in
\mathbb{R}^2$, the
weighting function that assesses the weights of the points $q \in \mathcal{Q}$ is
$\varphi_i(q) : \mathcal{Q} \rightarrow \mathbb{R}_+$, while $\delta_i$ and
$\delta_k^o$ are the occupancy radius of the $i$--th robot and the $k$--th
obstacle respectively. The problem that we aim to solve reads as follow:

\begin{problem}
    Let $N$ be the number of robots in the swarm, let $\mathcal{N}_i$ be the set
    of neighbors of the robot $i$, and $\bar{\mathcal{N}}_i \subseteq
    \mathcal{N}_i$ the set of neighbors that has to remain in the proximity of
    the robot $i$ for
    the entire mission. We want to synthesize a distributed control law, such
    that the swarm maintains the desired proximity constraints, which is
    characterized by a set of maximum distances $\Gamma=\{\Gamma_{ij} \mid
    i \in {1,\dots,N}, j \in \bar{\mathcal{N}}_i\}$. Each robot in the swarm
    has to reach its goal region avoiding collisions with other robots and
    obstacles.
\end{problem}

\section{Proposed Approach}
\label{sec:approach}

The presented approach is grounded in the Lloyd algorithm~\cite{lloyd1982least}.
Let us consider the following cost function
\begin{equation}\label{eq:jcov}
  J_{\operatorname{cov}}({p}, \mathcal{V})=\sum_{i \in \mathcal{N}_i }\int_{\mathcal{V}_i}\left\|q-p_i\right\|^2 \varphi_i(q) d q,
\end{equation}
where $\mathcal{V}= \{ \mathcal{V}_1, \dots, \mathcal{V}_N \}$,
$    \mathcal{V}_i= \mathcal{V}_i^{p} \cap \mathcal{V}_i^{o} 
$
with $\mathcal{V}_i^p$ being the $i$--th Convex Weighted Voronoi Diagram (CWVD)
introduced in~\cite{boldrer2020lloyd}, which  accounts only for neighboring
robots' positions, while $\mathcal{V}_i^o$ is the $i$--th CWVD accounting only
for neighboring obstacles' positions:

\begin{equation}\label{eq:cwvd}
  \small
  \begin{split}
  \mathcal{V}_i^p = \{q \in \mathcal{Q} \mid 
       w_i^x \cos \alpha_{ij} + w_i^y \sin \alpha_{ij} < \frac{1}{\epsilon_p}
       &\left\| p_i - p_j \right\|, \\
      & \forall j \in \mathcal{N}_i \}.
  \end{split}
\end{equation}
\begin{equation}
  \small
  \begin{split}
    \mathcal{V}_i^o=\{q \in \mathcal{Q} \mid w_i^x \cos \alpha_{ij} + w_i^y
    \sin \alpha_{ij} < \frac{1}{\epsilon_o}&\left\|p_i -o_j\right\| ,\\ &\forall j \in
    \mathcal{O}_i \},
  \end{split}
\end{equation}
\noindent where $w_i = [w_i^x,w_i^y]^\top = q - p_i$ and $\alpha_{ij} =
\arctan\left(\frac{y_j-y_i}{x_j-x_i}\right)$, while $\epsilon_{p(o)} \in
[1,2]$ regulates the cautiousness in the
robot motion (i.e., $\epsilon_{p(o)} = 1$ more aggressive behavior,
$\epsilon_{p(o)} = 2$ more conservative behavior, notice that the latter
coincides with the classic Voronoi diagram).
\begin{figure}[t]
 \centering
 \setlength{\tabcolsep}{0.05em}
 \begin{tabular}{cc}
   \includegraphics[width=0.4\linewidth]{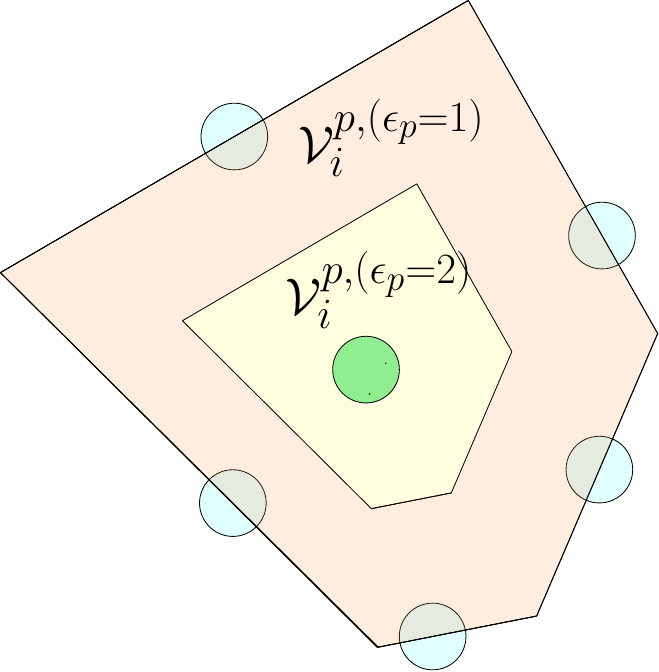} &
  \includegraphics[width=0.4\linewidth]{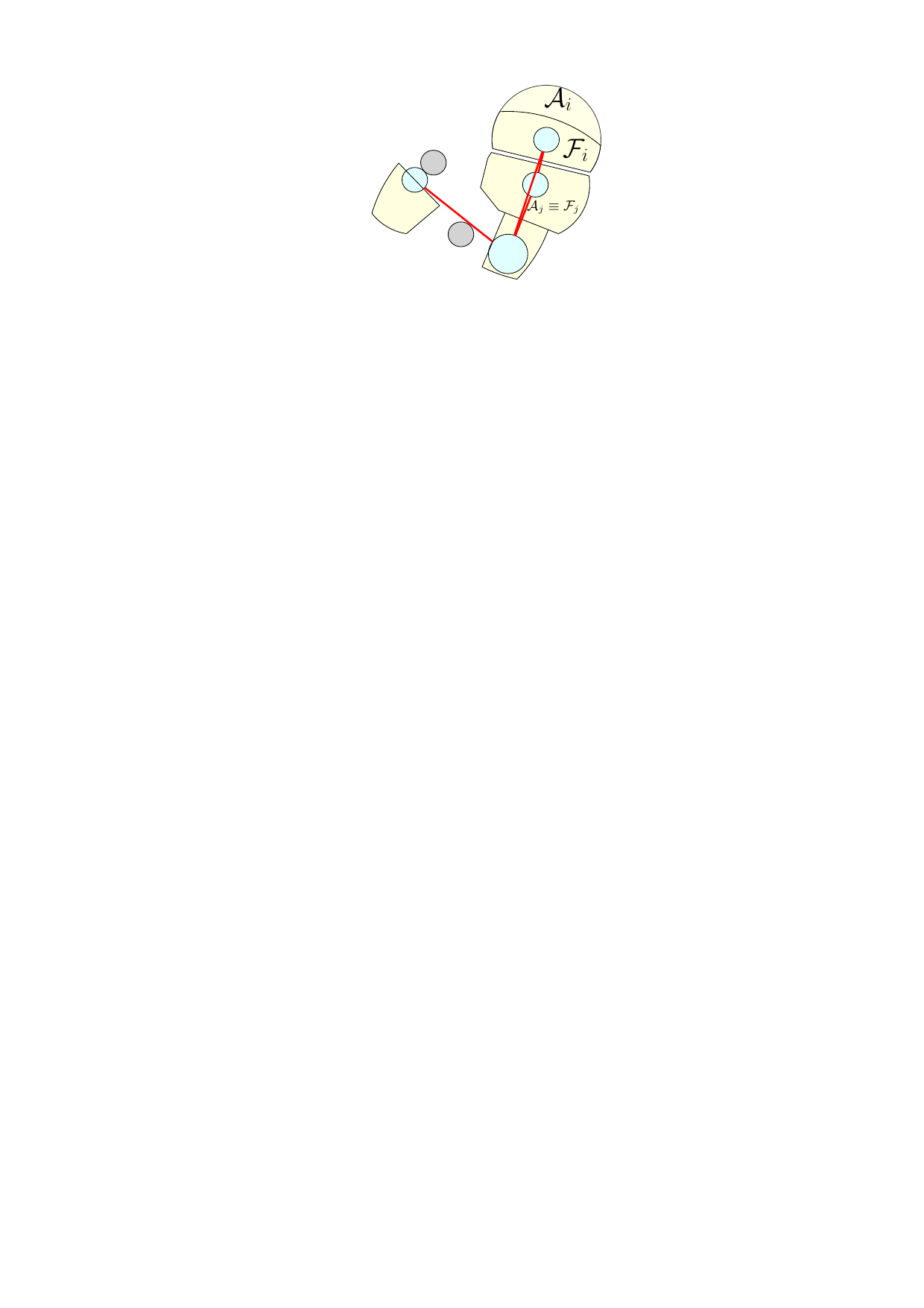}\\
   (a) & (b)
   \end{tabular}
  \caption{(a) Representation of the Convex Weighted Voronoi Diagram (CWVD)
  $\mathcal{V}_i^p$ in~\eqref{eq:cwvd} with $\epsilon_p=2$ and $\epsilon_p=1$.
  Where the green circle is the i--th robot and the cyan circles represent the
  neighboring robots. (b) Example of $\mathcal{F}_i$ and $\mathcal{A}_i$ cell
  geometry with $\epsilon_p =\epsilon_o = 2$. Red solid lines indicate the
  proximity maintenance constraints. Obstacles are represented as grey
  circles, while the robots are depicted in cyan. }
  \label{fig:vorocell}
\end{figure}
In Figure~\ref{fig:vorocell}a, we depict an example of the CWVD~\eqref{eq:cwvd} for
the case where $\epsilon_p=2$ and $\epsilon_p=1$.

 By following the gradient descent of the cost function~\eqref{eq:jcov}, we obtain
the following proportional control law
\begin{equation}\label{eq:lb}
  \dot{p}_i\left(\mathcal{V}_i\right)=-k_{p,i}\left(p_i-c_{\mathcal{V}_i}\right),
\end{equation}
where $k_{p,i}>0$ is a positive real number, and
\begin{equation}\label{eq:centroiddef}
  c_{\mathcal{V}_i}=\frac{\int_{\mathcal{V}_i} q \varphi_i(q) d q}{\int_{\mathcal{V}_i} \varphi_i(q) d q}
\end{equation}
is the centroid position evaluated over the $\mathcal{V}_i$ cell. By
imposing~\eqref{eq:lb}, assuming a time-invariant $\varphi_i(q)$, each robot
$i \in 1, \dots, N$, converges asymptotically to the centroid position
$c_{\mathcal{V}_i}$~\cite{cortes2004coverage}. In the following, we show how we can
attain a safe and effective algorithm for multi-robot swarming by introducing
suitable modifications to the geometry of cell $\mathcal{V}_i$ and crafting an
appropriate weighting function $\varphi_i(q)$
\subsection{Reshaping the cell geometry $\mathcal{V}_i$} 
\noindent The reshape of the cell geometry is both
necessary and sufficient to ensure safety guarantees. Hence, we define the cell
$\mathcal{A}_i = \tilde{\mathcal{V}}_i^p \cap \tilde{\mathcal{V}}_i^o  \cap
\mathcal{S}_i$, where
\begin{equation}\label{eq:Vip}
  \tilde{\mathcal{V}}_i^p=\left\{\begin{array}{l}
    \mathcal{V}_i^p  \text { if } \frac{1}{2}
    \left\|p_i-p_j\right\| > \Delta_{i j} ,\forall j \in \mathcal{N}_i, \\
    \left\{q \in \mathcal{Q} \mid\left\|q-p_i\right\|
    \leq\left\|q-\tilde{p}_j\right\|, \forall j \in \mathcal{N}_i
\right\} \text { otherwise, }  \end{array}\right.
\end{equation}
where $\Delta_{i j}=\delta_j+\delta_i$,  and
$\tilde{p}_j=p_j+2\left(\Delta_{i j}-\frac{\left\|p_i-p_j\right\|}{2}\right)
\frac{p_i-p_j}{\left\|p_i-p_j\right\|}$, takes into consideration the physical
dimension of the robots. To account for the obstacles,
\begin{equation}\label{eq:Vio}
  \tilde{\mathcal{V}}_i^o=\left\{\begin{array}{l}
    \mathcal{V}_i^o  \text { if } \frac{1}{2}
    \left\|p_i-p_j\right\| > \Delta^o_{i j} ,\forall j \in \mathcal{O}_i,  \\
    \left\{q \in \mathcal{Q} \mid\left\|q-p_i\right\|
    \leq\left\|q-\tilde{o}_j\right\|, \forall j \in \mathcal{O}_i \right\} \text { otherwise, }
  \end{array}\right.\
\end{equation}
where $\Delta^o_{i j}=\delta_i+\delta^o_j$  and
$\tilde{o}_j=o_j+2\left(\Delta^o_{i j}-\frac{\left\|p_i-o_j\right\|}{2}\right)
\frac{p_i-o_j}{\left\|p_i-o_j\right\|}$, takes into consideration the 
size of the robot and the obstacles, while
\begin{equation} \mathcal{S}_i=\left\{q  \in \mathcal{Q}
\mid\left\|q-p_i\right\| \leq r_{s, i}\right\}, 
\end{equation} where $r_{s,i}$
is half of the sensing radius $R_s$ of the robot $i$.

\begin{theorem}[Safety]\label{th:1}
  By imposing the control $\dot{p}_i(\mathcal{A}_i)$ in~\eqref{eq:lb}, with
$\mathcal{A}_i = \tilde{\mathcal{V}}_i^p \cap \tilde{\mathcal{V}}_i^o  \cap
\mathcal{S}_i$, collision avoidance is guaranteed at every instant of time.
\end{theorem}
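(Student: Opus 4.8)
The plan is to restate the theorem as a forward-invariance property. Starting from a collision-free configuration, it suffices to show that along the closed loop $\dot p_i=-k_{p,i}\bigl(p_i-c_{\mathcal{A}_i}\bigr)$ the distance $\|p_i-p_j\|$ never drops below $\Delta_{ij}$ for any two robots $i,j$, and $\|p_i-o_j\|$ never drops below $\Delta^o_{ij}$ for any robot/obstacle pair. I would establish both families of invariants by the same argument, and present the robot--robot case.

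First I would check that the control is well posed. Each constraint appearing in $\tilde{\mathcal{V}}_i^p$ and $\tilde{\mathcal{V}}_i^o$ is either affine in $q$ (the CWVD constraints of~\eqref{eq:cwvd}, because $w_i=q-p_i$, and the bisector inequalities of the second branches), and $\mathcal{S}_i$ is a disk, so $\mathcal{A}_i$ is convex; evaluating each constraint at $q=p_i$ shows $p_i$ satisfies all of them, strictly for those that actually cut the plane (using $p_i\neq p_j$, $p_i\neq o_j$ and $r_{s,i}>0$, which hold while the configuration is collision-free). Hence $\mathcal{A}_i$ contains a neighborhood of $p_i$, has positive $\varphi_i$-measure, $c_{\mathcal{A}_i}$ in~\eqref{eq:centroiddef} is well defined, and --- being a $\varphi_i$-weighted average of points of the convex set $\mathcal{A}_i$ --- it lies in $\mathcal{A}_i$, in particular in every one of those half-planes.

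The geometric core is the second branch of~\eqref{eq:Vip}. Fix $j\in\mathcal{N}_i$ and suppose $\Delta_{ij}<d_{ij}:=\|p_i-p_j\|\le 2\Delta_{ij}$, so this branch is active; put $u=(p_i-p_j)/d_{ij}$. A short computation from the definition of $\tilde p_j$ gives that $p_i,p_j,\tilde p_j$ are collinear and that the midpoint of $[p_i,\tilde p_j]$ is $p_j+\Delta_{ij}u$, so the half-plane $\{q:\|q-p_i\|\le\|q-\tilde p_j\|\}$ coincides with $\{q:u^\top(q-p_j)\ge\Delta_{ij}\}$; therefore $u^\top(c_{\mathcal{A}_i}-p_j)\ge\Delta_{ij}$. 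The same computation for robot $j$ with $\tilde p_i$ yields $u^\top(c_{\mathcal{A}_j}-p_j)\le d_{ij}-\Delta_{ij}$. Substituting $\dot p_i,\dot p_j$ into $\frac{d}{dt}\tfrac12 d_{ij}^2=(p_i-p_j)^\top(\dot p_i-\dot p_j)$ and using the two bounds gives $\frac{d}{dt}\tfrac12 d_{ij}^2\ge-(k_{p,i}+k_{p,j})\,d_{ij}\,(d_{ij}-\Delta_{ij})$, hence $\frac{d}{dt}\bigl(d_{ij}^2-\Delta_{ij}^2\bigr)\ge-2(k_{p,i}+k_{p,j})\bigl(d_{ij}^2-\Delta_{ij}^2\bigr)$, throughout $\Delta_{ij}<d_{ij}\le 2\Delta_{ij}$; a Gr\"onwall-type comparison then shows that $d_{ij}^2-\Delta_{ij}^2$ can never reach $0$, i.e.\ $\|p_i-p_j\|\ge\Delta_{ij}$ for all $t$. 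The obstacle case is identical, and simpler, using $\dot o_j=0$ and the second branch of~\eqref{eq:Vio}.

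The step I expect to need the most care is making ``at every instant of time'' rigorous under switching: the sets $\mathcal{N}_i,\mathcal{O}_i$ and the branch selection in~\eqref{eq:Vip}--\eqref{eq:Vio} vary along the trajectory, so $c_{\mathcal{A}_i}$ is only piecewise continuous and one should argue with absolutely continuous (Carath\'eodory) solutions rather than a smooth flow; the invariance argument survives because the favorable differential inequality holds exactly on the danger regime $d_{ij}\le 2\Delta_{ij}$. A second point to pin down is the implicit assumption that two robots close enough to risk a collision lie within each other's sensing radius, so that $j\in\mathcal{N}_i$ and $i\in\mathcal{N}_j$ simultaneously (and analogously for obstacles), which is what permits invoking the reshaped cell on both members of the pair.
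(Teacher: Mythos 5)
Your proof is correct, but it takes a genuinely different route from the paper. The paper argues by set membership: $\mathcal{A}_i$ is convex, contains both $p_i$ and $c_{\mathcal{A}_i}$, and by construction (the reflected point $\tilde p_j$, which activates once $\|p_i-p_j\|\le 2\Delta_{ij}$) excludes every point within $\Delta_{ij}$ of a neighbor, so the instantaneous motion toward the centroid never points into the forbidden region; the only quantitative step is checking that the switch to the reshaped cell occurs at $2\Delta_{ij}$, strictly before contact at $\Delta_{ij}$. You instead prove forward invariance of each pairwise distance directly: you identify the bisector constraint as the half--plane $u^\top(q-p_j)\ge\Delta_{ij}$ (your computation of the midpoint $p_j+\Delta_{ij}u$ is right), use the reciprocal constraint on robot $j$'s centroid, and derive a Gr\"onwall inequality for $d_{ij}^2-\Delta_{ij}^2$ on the danger regime. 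What your version buys is rigor on exactly the points the paper glosses over: the cells are recomputed continuously, so ``the straight path to the centroid is safe'' is only an instantaneous statement, and both robots move simultaneously; your differential inequality handles both, together with the discontinuous switching via Carath\'eodory solutions. What the paper's version buys is brevity and the fact that it covers robot--robot and robot--obstacle cases (and the $\mathcal{S}_i$ truncation) in one stroke without needing the reciprocity assumption $i\in\mathcal{N}_j\iff j\in\mathcal{N}_i$ explicitly --- though, as you correctly flag, that assumption (two robots at risk of collision mutually sense each other) is implicit in the paper as well. Your well-posedness check (convexity of $\mathcal{A}_i$, $p_i$ in its interior, hence the centroid exists and lies in the cell) is an additional step the paper takes for granted.
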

\begin{proof}
  The proof comes from the definition of convex set. Due to the fact that the
  set $\mathcal{A}_i$ is always convex, and by definition $p_i,c_{\mathcal{A}_i}
  \in \mathcal{A}_i$, it follows that since the straight path from the robot
  position to the centroid position is a safe path, then applying $\dot{p}_i(\mathcal{A}_i)$
  in~\eqref{eq:lb} preserves safety. 
  Due to the robots' size, the condition that the straight path from the
  robot to its centroid position is a safe path can be violated only if the
  $i$-th robot is at a distance greater than $2\Delta_{ij}$ ($2\Delta^o_{ij}$)
  from the $j$-th robot ($j$-th obstacle), according
  to~\eqref{eq:Vip},~\eqref{eq:Vio}. Nevertheless, to experience collision the
  distance between the $i$-th robot and the $j$-th robot ($j$-th obstacle) needs
  to be less than $2\Delta_{ij}$ ($2\Delta^o_{ij})$, i.e., equal to
  $\Delta_{ij}$ ($\Delta^o_{ij}$), hence safety is preserved.
\end{proof}

We further apply an additional reshaping of the cell geometry in order to
maintain the desired proximity of the agents, i.e., to achieve flocking
behavior. By imposing our final cell
geometry $\mathcal{F}_i =  \mathcal{A}_i \cap \mathcal{M}_i $, where
\begin{equation}\label{eq:conn}
  \mathcal{M}_i = \{q \in \mathcal{Q} \mid \| q-p_j \| \leq \Gamma_{ij}, \forall j \in \bar{\mathcal{N}}_i \},
\end{equation}
we obtain proximity maintenance while preserving safety. In
Figure~\ref{fig:vorocell}b, we depict an example of cell geometry $\mathcal{F}_i$.

\begin{theorem}[Proximity maintenance or flocking constraint]
  By imposing the control $\dot{p}_i(\mathcal{F}_i)$ in~\eqref{eq:lb}, both
  safety and proximity maintenance are guaranteed at every instant of time.
\end{theorem}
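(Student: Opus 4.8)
The plan is to split the statement into its safety half and its proximity half, reusing Theorem~\ref{th:1} essentially unchanged for the former and adding a short invariance argument for the latter.

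For safety, I would first observe that $\mathcal{M}_i$ in~\eqref{eq:conn} is an intersection of closed Euclidean balls, hence convex, and that $\mathcal{A}_i$ is convex by the reasoning already used in the proof of Theorem~\ref{th:1}; therefore $\mathcal{F}_i=\mathcal{A}_i\cap\mathcal{M}_i$ is convex. Assuming the mission starts in a feasible configuration, i.e.\ $\|p_i(0)-p_j(0)\|\le\Gamma_{ij}$ for all $j\in\bar{\mathcal{N}}_i$, we have $p_i(0)\in\mathcal{F}_i$, so $\mathcal{F}_i\neq\emptyset$ and $c_{\mathcal{F}_i}$ is well defined; then $p_i$ and $c_{\mathcal{F}_i}$ both lie in $\mathcal{F}_i\subseteq\mathcal{A}_i$, the straight segment traced by~\eqref{eq:lb} stays inside $\mathcal{A}_i$, and Theorem~\ref{th:1} applies verbatim. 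The only gap left in this half is that $\mathcal{F}_i$ must remain non-empty for all $t$, which I will close as a by-product of the proximity argument.

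For proximity maintenance, I fix $i$ and $j\in\bar{\mathcal{N}}_i$ (taking the relation to be symmetric, so $i\in\bar{\mathcal{N}}_j$ and $\Gamma_{ij}=\Gamma_{ji}$, which is needed since robot $j$ must also constrain itself toward $i$) and study $V_{ij}(t)=\|p_i(t)-p_j(t)\|^2$. I would show the sublevel set $\{V_{ij}\le\Gamma_{ij}^2\}$ is forward invariant by checking $\dot V_{ij}\le 0$ on its boundary. Substituting~\eqref{eq:lb},
\[
\dot V_{ij}=-2k_{p,i}(p_i-p_j)^\top(p_i-c_{\mathcal{F}_i})-2k_{p,j}(p_j-p_i)^\top(p_j-c_{\mathcal{F}_j}).
\]
Since $c_{\mathcal{F}_i}\in\mathcal{F}_i\subseteq\mathcal{M}_i$ forces $\|c_{\mathcal{F}_i}-p_j\|\le\Gamma_{ij}$, writing $p_i-c_{\mathcal{F}_i}=(p_i-p_j)-(c_{\mathcal{F}_i}-p_j)$ and applying Cauchy--Schwarz gives, on $\{V_{ij}=\Gamma_{ij}^2\}$,
\[
(p_i-p_j)^\top(p_i-c_{\mathcal{F}_i})=\|p_i-p_j\|^2-(p_i-p_j)^\top(c_{\mathcal{F}_i}-p_j)\ \ge\ \|p_i-p_j\|^2-\|p_i-p_j\|\,\|c_{\mathcal{F}_i}-p_j\|\ \ge\ 0,
\]
and the mirror inequality for the $j$-term follows from $c_{\mathcal{F}_j}\in\mathcal{M}_j$ and $\Gamma_{ji}=\Gamma_{ij}$. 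Hence $\dot V_{ij}\le 0$ on the boundary, the sublevel set is invariant, and with initial feasibility we conclude $\|p_i(t)-p_j(t)\|\le\Gamma_{ij}$ for all $t$ and all $j\in\bar{\mathcal{N}}_i$; this simultaneously guarantees $p_i(t)\in\mathcal{F}_i(t)$, closing the non-emptiness gap from the safety half.

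I expect the main obstacle to be reconciling this instantaneous, continuous-time argument with the asynchronous, sampled-data nature of the controller stressed earlier: $c_{\mathcal{F}_i}$ is recomputed only at update instants, from neighbor positions sensed at those instants. I would bridge this per inter-update interval — robot $i$ travels along a segment contained in the ball of radius $\Gamma_{ij}$ about $p_j$'s planning-time position, robot $j$ does the mirror, and mutuality of the constraint prevents the pair distance from opening past $\Gamma_{ij}$ — effectively replacing the differential inequality by its discrete counterpart. A secondary, minor point is the nonsmooth switching in $\tilde{\mathcal{V}}_i^p$ and $\tilde{\mathcal{V}}_i^o$ (hence in $c_{\mathcal{F}_i}$): $\dot V_{ij}$ should be read as an upper Dini derivative, but since the two inequalities above hold for every one-sided limit of $c_{\mathcal{F}_i}$, the invariance conclusion is unaffected.
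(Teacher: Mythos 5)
Your proof is correct, but it takes a genuinely different --- and more demanding --- route than the paper for the proximity half. The paper's argument is purely set-theoretic: $\mathcal{F}_i\subseteq\mathcal{A}_i$ gives safety via Theorem~\ref{th:1}; then convexity of $\mathcal{M}_i$, hence of $\mathcal{F}_i$, places $c_{\mathcal{F}_i}$ inside $\mathcal{F}_i$, so the segment traced by~\eqref{eq:lb} keeps $p_i\in\mathcal{F}_i$, and membership in $\mathcal{M}_i$ \emph{is} the proximity constraint. You reproduce that reasoning for safety, but for proximity you replace ``$p_i$ stays in $\mathcal{F}_i$'' with a Nagumo-style invariance argument on $V_{ij}=\|p_i-p_j\|^2$, showing $\dot V_{ij}\le 0$ on $\{V_{ij}=\Gamma_{ij}^2\}$ from $c_{\mathcal{F}_i}\in\mathcal{M}_i$, $c_{\mathcal{F}_j}\in\mathcal{M}_j$ and Cauchy--Schwarz. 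What this buys is a rigorous treatment of the point the paper elides: $\mathcal{M}_i(t)$ is anchored to the \emph{moving} position $p_j(t)$, so ``both endpoints lie in a convex set'' does not by itself imply forward invariance --- the set can slide out from under the robot unless $j$ reciprocates. Your argument makes explicit the hypotheses the paper uses silently: symmetry of the relation ($j\in\bar{\mathcal{N}}_i\Rightarrow i\in\bar{\mathcal{N}}_j$ with $\Gamma_{ij}=\Gamma_{ji}$, consistent with the symmetric adjacency matrices used in the experiments), initial feasibility $\|p_i(0)-p_j(0)\|\le\Gamma_{ij}$ so that $\mathcal{F}_i$ is non-empty, and the need to interpret $\dot V_{ij}$ as a Dini derivative across the switches in~\eqref{eq:Vip}--\eqref{eq:Vio}. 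Your closing remark on the sampled-data/asynchronous gap is also well taken and is not addressed by the paper (nor fully closed by your sketch --- the inter-update argument would need the planning-time positions of \emph{both} robots to be reconciled); but as a continuous-time statement, which is how the theorem is phrased, your proof is complete and strictly more informative than the paper's.
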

\begin{proof}
  Given that $\mathcal{F}_i \subseteq \mathcal{A}_i$, safety comes from
  Theorem~\ref{th:1}. Since $\mathcal{M}_i$ is convex by definition,
  $\mathcal{F}_i$ is also convex, and thus the centroid $c_{\mathcal{F}_i} \in
  \mathcal{F}_i$. It  implies that by imposing $\dot{p}_i(\mathcal{F}_i)$
  in~\eqref{eq:lb}, $p_i \in \mathcal{F}_i$ at any time. Hence, by the
  definition of the $\mathcal{F}_i$ set, proximity maintenance between $i$ and
  $j$ is always preserved.
\end{proof}
\subsection{The weighting function $\varphi_i(q)$}
Although ensuring safety and proximity maintenance only involves properly
shaping the cell geometry, convergence to the goal and enhancing navigation
performance largely hinges on how the function $\varphi_i(q)$ is defined.
As originally proposed in~\cite{boldrer2023rule}, we adopt a
Laplacian-shaped function centered at $\bar{p}_i$, which represents a point in
the mission space that ultimately needs to converge to the goal location. We
design the $\varphi_i(q)$ function as follows:
\begin{equation}
  \label{eq:phi}
  \varphi_i(q) =
  \text{exp}{ \left( -\frac{\|q-\bar{p}_i\|}{\beta_i}\right)},
\end{equation}
where the spreading factor $\beta_i \in \mathbb{R}$ is time-variant. It follows
the dynamics \begin{equation}
  \label{eq:rho}
  \dot{\beta}_i(\mathcal{A}_i) = \begin{cases} -k_{\beta}\beta_i \,\,\,
    &\text{if}\,\, \|c_{\mathcal{A}_i}-p_i\| <d_{1} \, \land \,\\
    &\|c_{\mathcal{A}_i} -c_{\mathcal{S}_i}\|> d_{2}, \\
    -k_{\beta}(\beta_i-\beta^D_i) &\text{otherwise,} \end{cases}
\end{equation}
and

\begin{equation}
  \label{eq:wp}
  \begin{split}
    \dot{\bar{p}}_i &= \begin{cases}
      -k_e(\bar{p}_i -R(\pi/2-\varepsilon)e_i) \,\,\, &\text{if}\,\,
      \|c_{\mathcal{A}_i}-p_i\| < d_{3} \, \land \,\\ &\| c_{\mathcal{A}_i} -
      c_{\mathcal{S}_i}\|> d_{4}, \\ -k_e(\bar{p}_i-e_i)
      &\text{otherwise,}
    \end{cases} \\
    \bar{p}_i &= e_i  \,\,\, \text{if} \,\,\, \| p_i - \bar{c}_{\mathcal{A}_i}\| > \| p_i - c_{\mathcal{A}_i}\| \, \land \, \bar{p}_i = R(\pi/2-\varepsilon)e_i. \\
  \end{split}
\end{equation}
The parameters $\beta_i^D, d_1, d_2, d_3, d_4,k_{\beta}, k_e$ are
positive scalars, $R(\theta)$ is the rotation matrix, $e_i$ is the goal position
for the $i$--th robot, $\varepsilon$ is a small number, which allows to avoid
rotations equal to $\pi/2$, while $\bar{{c}}_{\mathcal{A}_i}$ is the centroid
position computed over the cell $\mathcal{A}_i$ with $\bar{p}_i \equiv e_i$.
Equations~\eqref{eq:rho} and~\eqref{eq:wp} specify two adaptive rules.
Intuitively,~\eqref{eq:rho} regulates the greediness of the robot $i$ in
reaching its goal region, while~\eqref{eq:wp} introduces a right (or left)
hand-side rule to enforce a common behavioral convention across the robots and
hence enhance inter-robot coordination.

\subsection{Dealing with dynamics and uncertainties}\label{sec:uncertainties}
The robot deployment requires to consider a set of additional issues.
First, more complex dynamics of the robots must be applied as $\dot{p}_i = u_i$
is not directly applicable. As a consequence, tracking error, which is caused by
model mismatch and external disturbances, generating a difference between the
desired and the actual velocity, has to be taken into account. Finally, we have
to consider measurement uncertainties arising from limitations of the perception
system providing an estimate of the positions of other robots and obstacles with
a certain error.

To account for the dynamic constraints of the robots, we synthesize the control
inputs through an MPC formulation (similarly
to~\cite{zhu2022decentralized,boldrer2023rule}). Hence, each robot $i$ solves the following optimization
problem (for the sake of clarity we discard the $i$ indexing),
\begin{subequations} \label{eq: MPC problem_formulation}
  \begin{align}\label{eq:cost_generic} \minimize_{{x}_k,{u}_k}& \sum^{N_t}_{k=0}
    J({x}_k,{u}_k, c_{\mathcal{F}}),\\ \label{eq:MPC
    dyn_constraints_generic}\textrm{s.t.: } & {x}_{k} = f({x}_{k-1},{u}_{k-1}),
    \,\,\, \forall k \in (1,N_t) ,\\ \label{eq:MPC bounds_generic}&\,{x}_k\in
    \mathcal X, \, {u}_k \in \mathcal U, \,\,\, \forall k \in (0,N_t),\\
   \label{eq:init_generic} &{x}_0 = {x}_{\textrm{init}},
  \end{align} \end{subequations} where $k$ indicates the time index, $N_t$
  indicates the horizon length, $J$ represents the cost function,
  ${x}_{k}=[p_k^\top,\dot{p}_k^\top,\ddot{p}_k^\top]^\top$ and
  ${u}_{k}$ are the state and control inputs respectively,
  $f({x}_{k-1},{u}_{k-1})$ indicates the dynamics of the robot, ${x}_k\in
  \mathcal X, \, {u}_k\in \mathcal U$ are the state and input constraints
  respectively, and ${x}_0 = {x}_{\textrm{init}}$ is the initial
  condition.

Both the tracking errors and measurement uncertainties can be
accounted for by means of an adaptive strategy. Specifically, it is 
achieved by adapting the value of $\beta_i$ in~\eqref{eq:rho} in an
online fashion. An alternative approach would be reshaping the
$\mathcal{F}_i$ cell, similarly to~\cite{zhu2022decentralized}. However, this
method cannot be applied in our case, since, by following this approach, the
robot may fall out of its cell due to tracking error, and once it happens, both
the cell and the centroid position are no longer defined. Conversely, the value
of $\beta_i$ defines the aggressiveness of the $i$--th robot. A smaller value
corresponds to a centroid position closer to the goal (and hence closer to the
boundaries of the $\mathcal{F}_i$ cell, namely $\partial \mathcal{F}_i$), while
larger values of $\beta_i$ push the centroid position as far as possible from
the boundaries of the cell $\partial \mathcal{F}_i$. Thus, managing errors and
uncertainties is achieved by selecting a value of $\beta_i$ such that the
centroid position stays at a controlled distance from the boundary of
the cell. This distance must be chosen in accordance with the maximum
positional tracking error and the maximum error in the neighbors' position
estimates. More formally, it can be done by imposing a lower bound to $\beta_i$
in \eqref{eq:rho} equal to
  \begin{equation}\label{eq:beta_min}
\beta^{\min}_i = \argmin_{\beta_i} \left(\|c_{\mathcal{F}_i} - p_{\partial
  \mathcal{F}_i} \| - d_u\right)^2,
\end{equation} 
  where  $p_{\partial \mathcal{F}_i}$ is the position of the closest point to
  $c_{\mathcal{F}_i}$ on the boundary of $\mathcal{F}_i$, while $d_u$ is
  a threshold value that has to be
tuned on the basis of the tracking errors and measurement uncertainties. For
the sake of clarity, we depicted three scenarios in Figure~\ref{fig:errors}.
Figure~\ref{fig:errors}a considers the tracking error; in this case the
value of $c_{\mathcal{F}_i}$ is controlled to stay at least at a distance
$d^t_u$ from the boundary $\partial \mathcal{F}_i$. We indicate with
$\mathcal{Z}_i$ the region set where the $i$--th robot is allowed to go in the
next time step. Notice that the dimension of the region $\mathcal{Z}_i$ can be
controlled by $d_u^t$. To be conservative, the value of $d_u^t$ should be
determined empirically as the maximum tracking error, and it can be set as a
function of the velocity. In Figure~\ref{fig:errors}b, we show the case 
where we have measurement uncertainties. We illustrate both the $i$--th and the
$j$--th robots' positions and the ellipse of uncertainty $\mathcal{E}_{i,j}$.
In this case, $\mathcal{K}_i$ indicates the region where the centroid
  $c_{\mathcal{F}_i}$ has to be constrained to preserve safety. Let us consider
  the case where $\epsilon_{p(o)}=1$. By relying on
  geometric reasoning (refer to Figure~\ref{fig:errors}b), depending on
  the inter-robot distance measurement uncertainty, we can select the
  parameter $d_u^m$ as \begin{equation}
    d_u^m \geq k \sqrt{\lambda^{\max}}+\delta_j+\delta_i,
    \label{eq:d_inequality}
  \end{equation}
where $\lambda^{\max}$ is the maximum eigenvalue of the covariance matrix
associated with the distance measurement and $k$ is a scale factor that
defines the confidence ellipse.

From a practical point of view, the value of $d_u^m$ may be set empirically to a
constant that satisfies inequality (\ref{eq:d_inequality}) in the worst expected
scenario.
In real-world equipments the uncertainties associated with the measurements are
far from being constant values. Hence, the user can set $d_u^m$ dynamically,
based on the current measurement uncertainties, which are reported by the
equipment together with the measurement means. One method to implement this, is
to use the \emph{restraining} method from~\cite{walter2023distributed}. This
technique offsets the set-points of the robots from an ideal target location
according to the probability distributions of relative pose observations.

On the other hand, to preserve proximity maintenance, a sufficient
condition is that $d_u^m \geq k \sqrt{\lambda^{\max}}$. Finally, the
tracking error and measurement uncertainties are combined as $d_u =
d_u^m+d_u^t$ in Figure~\ref{fig:errors}c.

Hence, by imposing the control $\dot{p}_i(\mathcal{F}_i)$ in~\eqref{eq:lb}, with
$\mathcal{F}_i = \tilde{\mathcal{V}}_i^p \cap \tilde{\mathcal{V}}_i^o  \cap
\mathcal{S}_i \cap \mathcal{M}_i$ and assuming a maximum uncertainty in the
estimation of the other robots' distances equal to $k\sqrt{\lambda^{\max}}$, by
selecting $d_u^m \geq k\sqrt{\lambda^{\max}}$, and assuming a bounded maximum
tracking error less or equal than $d^t_u$, both safety and proximity maintenance
are preserved by selecting $\beta_i^{\min}$ according to~\eqref{eq:beta_min} for
all $i=1,\ldots,N$.
Specifically, let us choose $d_u^m$ such that it satisfies~\eqref{eq:d_inequality}
and $d_u=d^t_u+d_u^m$.  By enforcing a lower bound
$\beta_i^{\min}$~\eqref{eq:beta_min}, we set a minimum distance $d_u$ between
the centroid position $c_{\mathcal{F}_i}$ and the boundary of the cell $\partial
\mathcal{F}_i$. Due to the definition of $d_u$, this margin explicitly accounts
for both estimation uncertainty and tracking error, thereby ensuring safety and
proximity maintenance for all admissible realizations of the neighbors' positions
within the confidence ellipse. More intuitively, the centroid is
constrained to remain within a region that preserves safety and proximity
maintenance by accounting for tracking error and all possible neighbors'
positions consistent with the uncertainty bounds. If there exists no point $q
\in \mathcal{F}_i$ such that $\text{dist}(q,\partial \mathcal{F}_i) \geq d_u$,
according to~\eqref{eq:beta_min}, the centroid position $c_{\mathcal{F}_i}$, is
the point that maximizes the distance from $\partial \mathcal{F}_i$. Under this
condition, strict guarantees of safety and proximity maintenance cannot be
ensured for all possible uncertainty realizations. Nevertheless, in this case
the robot adopts the most conservative behavior, placing the centroid at the
location that maximizes boundary clearance. The guarantees provided by the
proposed conditions are therefore sufficient and hold whenever a feasible margin
exists; they rely on worst-case realizations within the assumed uncertainty
bounds and bounded tracking errors.

\begin{figure}
  \centering
  \includegraphics[width=.70\columnwidth]{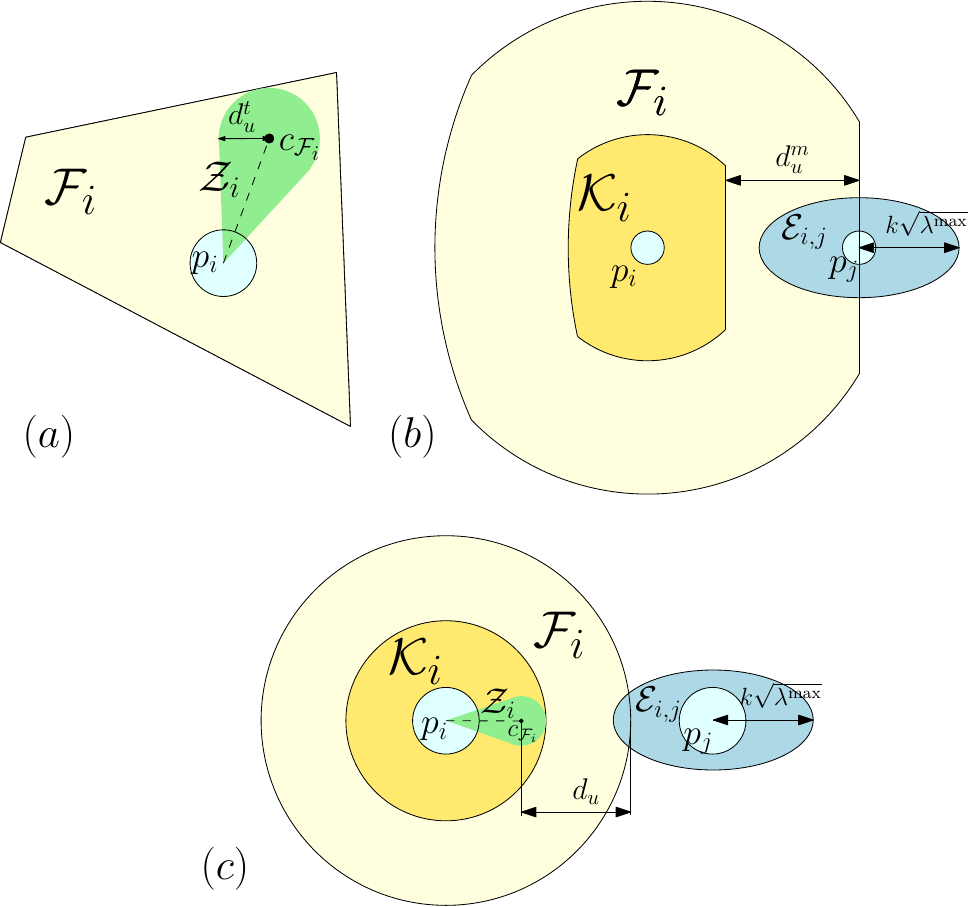}
  \caption{Dealing with uncertainties: (a) tracking error case, (b) measurement
  uncertainty case, (c) combined tracking error and measurement uncertainty. The
  robots are indicated with the cyan circles. We represent in light yellow the
  $\mathcal{F}_i$ set, in dark yellow the set $\mathcal{K}_i$, in green the set
  $\mathcal{Z}_i$, in blue the ellipse of uncertainty $\mathcal{E}_{i,j}$. }
  \label{fig:errors}
\end{figure}

\begin{remark} [Tracking error and measurement uncertainties invariance
  property] Since we are adopting the same strategy to account for the tracking
  errors and measurements uncertainty, we get the property of invariance. In
  practice, it does not matter how  $d_u^m$ and $d_u^t$ are chosen, what
  matters is the sum $d_u$, hence in the case of underestimation of one of the
two terms, it is compensated by the other. \end{remark}

\section{Simulation results} \label{sec:Simulation results}
In this section, we first show through simulations the scalability,
repeatability and robustness of our approach in two different scenarios, with and
without accounting for uncertainties and tracking errors.
Then we report a comparison with PACNav~\cite{ahmad2022pacnav}, the algorithm
does not rely on communication network, and shares similar features with
the proposed algorithm (refer to Table~\ref{tab:sota}). Notice that comparing
with algorithm such as~\cite{zhou2021ego,zhou2022swarm,zhu2024swarm} is out of
scope of this paper. In most of the scenarios, our method does not perform
better than algorithms that can share information, such as future trajectories,
among the robots. Despite~\cite{ahmad2022pacnav} performs well in relatively
simple environments and with a limited number of robots, it lacks reliability in
more complex scenarios. In addition, crucial requirements such as safety and
proximity maintenance depend strongly on fine parameter tuning. This is a common
issue in the state-of-the-art algorithms, which is overcome in our solution,
where the parameter selection affects only the performance of the mission,
while, safety and proximity maintenance are met if they are selected according
to the theoretical findings. All the simulations are performed by relying on the
MRS simulator~\cite{baca2021mrs}, which account for full rigid-body UAV
dynamics.

In the first simulations, we consider different set of parameters
$d_u=[0,1.0]$~(m) in two different scenarios, with $N=[9,16]$, $\delta_i =
0.2$~(m), $\epsilon =1.0$, and randomly generated circular obstacles with
$\delta_o =0.15$~(m) (see Figure~\ref{fig:sim}). We select the parameters as
follows for all the simulations $r_{s,i} = 5.0$~(m), $d_1=d_2= d_3=d_4=1$~(m),
$\beta_i^D=0.15$, $\Gamma_{ij} = 10$~(m), while the adjacency matrices defining
proximity maintenance constraints are defined as grid matrices, as depicted in
Figure~\ref{fig:sim} (red lines). We simulated UVDAR noise as a bounded radial
random measurement error with magnitude not exceeding $0.8$~(m). In
Table~\ref{tab:simulation}, we report the quantitative results in terms of
average time to reach the goal and success rate over a total of $40$ runs. As
expected, setting $d_u=0$~(m) leads to better performance in terms of time to
reach the goal; however, it does not ensure collision avoidance between the
robots. In contrast, selecting $d_u$ according to the theoretical findings
results in safe behavior and preserves the proximity maintenance constraints.
Figure~\ref{fig:sim} illustrates the two scenarios with $N=[9, 16]$ robots, and
shows the trajectories generated for one instance with $d_u=1.0$~(m).
Figure~\ref{fig:dist_sim} illustrates the distance between each robot in the two
scenarios. The colored lines indicate whether the proximity maintenance
constraint is active.
To further validate the robustness of the proposed approach, we additionally
performed 200 simulations across four different parameter sets in the same
scenario depicted in Figure~\ref{fig:sim}a, consistently achieving the
goal preserving safety and the proximity maintenance constraints.

\begin{figure}[t]
 \centering
 \setlength{\tabcolsep}{0.05em}
 \begin{tabular}{cc}
   \includegraphics[width=0.45\linewidth]{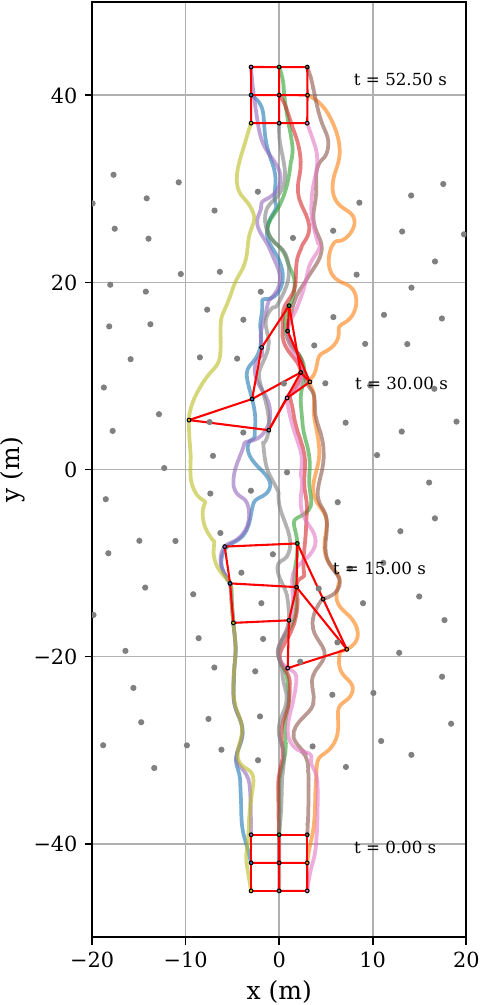} &
  \includegraphics[width=0.45\linewidth]{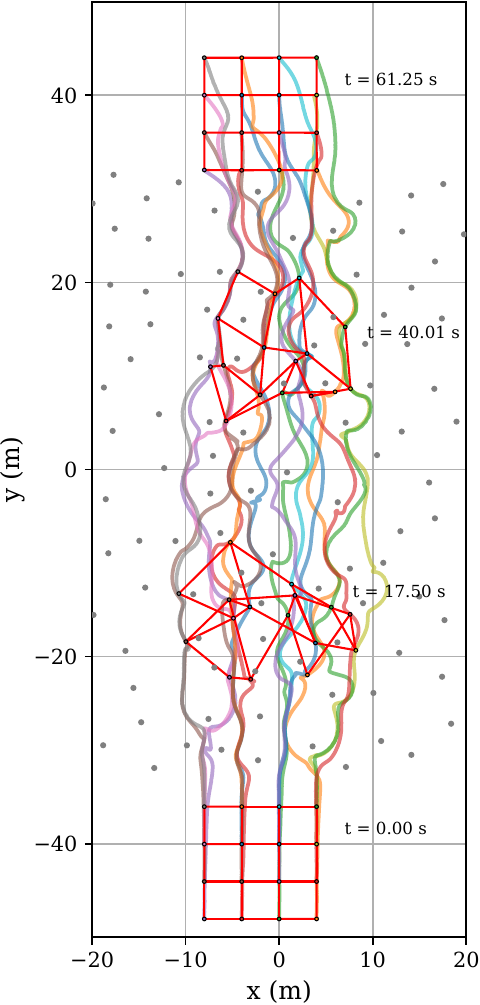}\\
   (a) & (b)
   \end{tabular}
  \caption{Simulation results with $N=9$ and $N=16$ robots, with
  $d_u=1.0$~(m). The images represent example scenarios used for generation of
  results presented in Table~\ref{tab:simulation}. Colored circles represent the
  robots' positions at different time $t$. Red lines represent the proximity
  maintenance constraints between the robots. Black dots are the obstacles in
  the scene.} 
  \label{fig:sim}
\end{figure}

\begin{figure}[t]
 \centering
 \setlength{\tabcolsep}{0.05em}
 \begin{tabular}{cc}
   \includegraphics[width=1.0\linewidth]{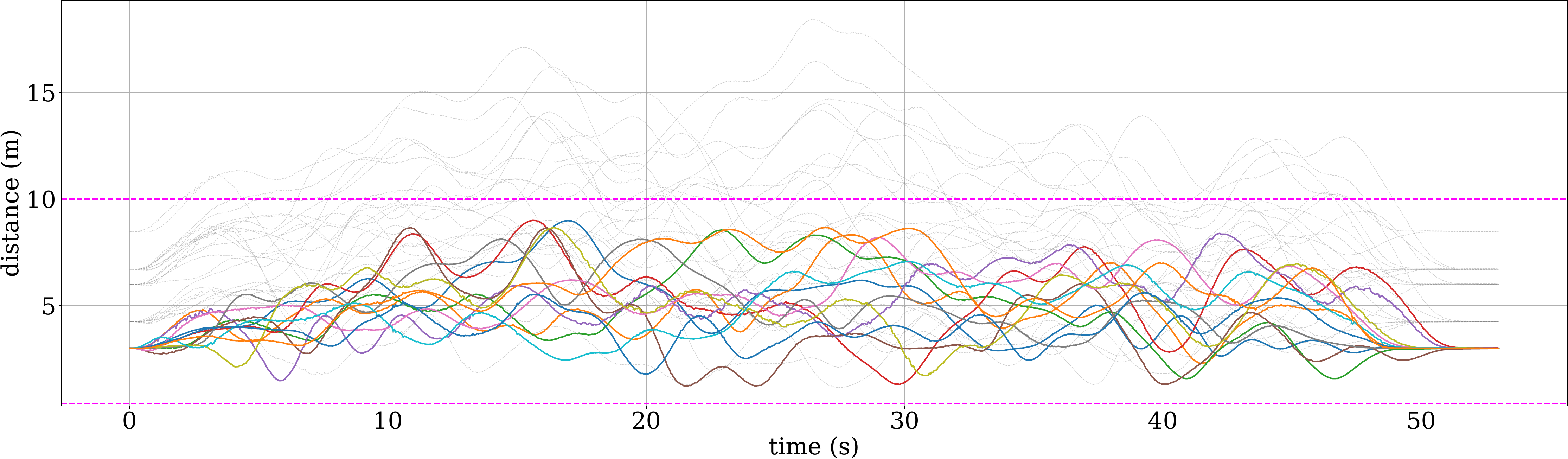}\\ 
  \includegraphics[width=1.0\linewidth]{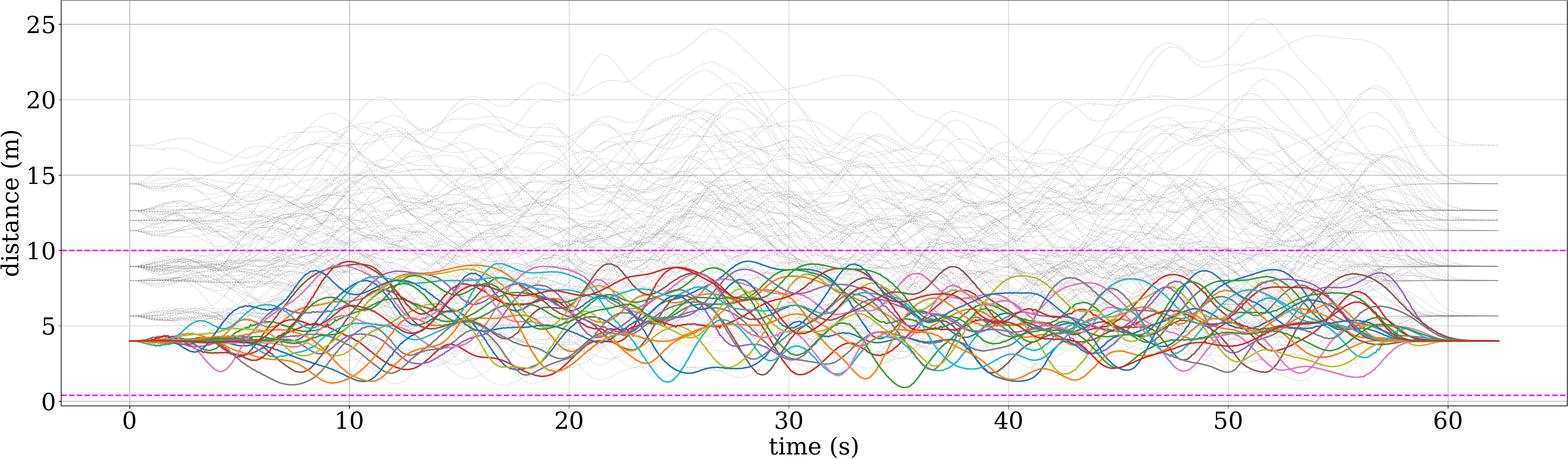}\\
   \end{tabular}
  \caption{Distances between UAVs in the simulations in
  Figure~\ref{fig:sim}a and~\ref{fig:sim}b. The flocking and safety constraints
  are in magenta. Colored lines indicate the presence of an active flocking
  constraint between the two robots.} 
  \label{fig:dist_sim}
\end{figure}

\begin{table}
    \centering
    \renewcommand{\tabcolsep}{0.45cm} 
    \begin{tabular}{|c|c|c|c|c|}
    \hline
       \multicolumn{1}{|c|}{} &\multicolumn{2}{c|}{\footnotesize$d_u =0~(m)$} & \multicolumn{2}{c|}{$d_u =1.0~(m)$} \\ \hline
      N & avg. time (s) & SR & avg. time (s) & SR \\ \hline
      9 & 32.71 & 0.50  & 54.91 & 1.00 \\ \hline
      16 & 34.94 & 0.40  & 64.53 & 1.00 \\ \hline
    \end{tabular}
    \vspace{10pt} 
    \caption{The simulation results obtained in terms of time to reach the
    goal and success rate (SR) for $10$ instances in $2$ different scenarios with
    $N=[9, 16]$. We report the results with different parameter selection i.e.,
    $d_u=0$~(m) and $d_u=1.0$~(m).}
    \label{tab:simulation}
\end{table}

For the comparison with PACNav~\cite{ahmad2022pacnav}, we simulate three
scenarios with different number of robots and a varying
number of randomly generated circular obstacles. In the simulation, we do not
inject noise in the measurements, nevertheless tracking error is present. For
each scenario, we perform $10$ runs for statistical evidence. The metrics that
we took into account are the time required for all the robots to travel
$42$~meters in the goal direction and the success rate in terms of mutual
collision avoidance and obstacle avoidance. In Table~\ref{tab:simulation1}, we
report the results obtained from PACNav~\cite{ahmad2022pacnav} and from our
approach. We considered the following scenarios: $N=[4,9,9]$, $\delta_i =
[0.25,0.5,0.5]$~(m), $\delta_o =[0.3, 0.75, 0.75]$~(m), and we select the
parameters as follows for all the three scenarios $r_{s,i} = 4.5$~(m),
$d_1=d_3=0.5$~(m), $d_2=d_4=1$~(m), $\beta_i^D=0.5$, $\Gamma_{ij} = 8$~(m), $d_u
= 0.8$~(m). The adjacency matrices defining the proximity maintenance
constraints were selected as grid adjacency matrices.

For our approach, we considered two different values for the parameters
$\epsilon = \epsilon_p = \epsilon_o$. The parameter selection for
PACNav~\cite{ahmad2022pacnav} has been done by trial and error approach, on the
basis of the environment and the number of robots. Besides that, as seen in
Table~\ref{tab:simulation1}, PACNav does not scale in terms of performance
(convergence, safety, and proximity maintenance) when the number of robots
grows, due to the complex superimposition of forces given by the surroundings.

\begin{table}
    \centering
    \renewcommand{\tabcolsep}{0.1cm} 
    \begin{tabular}{|c|c|c|c|c|c|c|c|}
    \hline
      & &\multicolumn{2}{c|}{PACNav~\cite{ahmad2022pacnav}} &
      \multicolumn{2}{c|}{Our ($\epsilon = 2.00$)} & \multicolumn{2}{c|}{Our
      ($\epsilon = 1.052$)} \\ \hline
      Scenario & N & avg. time (s) & SR & avg. time (s) & SR & avg. time (s) & SR \\ \hline
      1 & 3 & 55.01 & 1.00 & 39.96 & 1.00 & 24.23 & 1.00 \\ \hline
      2 & 9 & - & 0.00 & 45.71 & 1.00 & 29.53 & 1.00 \\ \hline
      3 & 9 & - & 0.00 & 120.33 & 1.00 & 55.87 & 1.00 \\ \hline
    \end{tabular}
    \vspace{10pt} 
    \caption{The simulation results obtained in terms of time to
    reach the goal and success rate for $10$ instances in $3$ different
    scenarios, comparing PACNav~\cite{ahmad2022pacnav} with our method in two
    different parameter configurations.}
    \label{tab:simulation1}
\end{table}

For further simulation results, please refer to the multimedia
material.

\section{Experimental Results}
\label{sec:Experimental results}
To validate our algorithm, we implemented it on real robotic platforms. 
To detect the surrounding obstacles we relied on
the RPLiDAR A3. In order to have a robust and safer perception system, we
introduced a double layer of obstacles. The first layer of obstacles is given by
the occupancy map built online by Hector~\cite{hector_slam}, while the second
layer is given by the raw pointcloud provided by the RPLiDAR A3. Given this
information, we cluster the points on the basis of their spatial distribution.
Then we compute the closest point to the robot position for each cluster, in
order to estimate the position of the obstacles in the scene. By doing that, we
exploit both the reactiveness of the raw data coming from the LiDAR pointcloud,
and at the same time, the scan-matching based obstacle representation, which is
provided by the Hector mapping algorithm. At the same time we do not need to
know the obstacles' size, since we are considering the closest point, we can
account for a small value for $\delta^o$.
The relative positions of the neighboring robots are obtained by relying on the
\ac{UVDAR} technology~\cite{walter2019uvdar}. The sensor system operates by
detection of blinking ultraviolet LEDs. The blinking signal is retrieved by
on-board cameras with tailored filters. Each UAV is individually identifiable
according to its blinking frequency. The pose of the UAVs is inferred through
computer vision techniques.

The experimental deployment is conducted in the forest, where we do not have
access to the GNSS signal. The goal positions are set to be $40$~(m) ahead with
respect to the initial position of each robot, notice that we do not need to
have a common reference frame. We choose to adopt a fixed value of $d_u$
in the experiments. Due to the high measurements' uncertainties the value of
$d_u$ did not satisfy~\eqref{eq:d_inequality} all the time. In this regard, two
practical solutions were adopted in the experiments: (i) We discard measurements
with values of $\lambda^{\max}>15$, and (ii) if the measured distance between
$i$-th and $j$-th robot with $j \in \bar{\mathcal{N}_i}$ is greater than
$\Gamma_{ij}$, we re-project the position of the $j$-th robot to be at a
distance less than $\Gamma_{ij}$ from the $i$-th robot, in order to ensure the
existence of a nonempty set $\mathcal{F}_i$. These two strategies help to
practically manage large estimation errors that exceed the theoretical threshold
e.g., temporary failures in the estimation of distance to other robot's due
to tree occlusions.

We tested the algorithm with up to $N=4$, by selecting $\delta_i = 0.3$ (m),
$\delta^o = 0.05$ (m), $r_{s,i} = 6$ (m), $d_1=d_3=0.5$~(m), $d_2=d_4=1$~(m),
$\beta_i^D=2.5$, $\Gamma_{ij} = 12$~(m), $d_u = 0.8$~(m), $\epsilon_p =
\epsilon_o = 1.052$, and the adjacency matrix that defines the flocking
constraints is $A_d = \left[\begin{smallmatrix}
0&1&0&1\\1&0&1&0\\0&1&0&1\\1&0&1&0 \end{smallmatrix}\right]$. 

In Figure~\ref{fig:map}-(a) we show the trajectory of UAV~$2$, the positions
of the other robots measured by UVDAR system, and the map built by UAV~$2$ by
using Hector SLAM during the mission.
In Figure~\ref{fig:distancesuvdar}, we depict the distances between UAV~$2$ and
the other robots according to the UVDAR measurements. In Table~\ref{tab:vel1}, we
report the average velocities, the maximum velocities, and the time to reach
the goal for each UAV during the mission.
To verify the success of the mission, we report in Figure~\ref{fig:map}-(b) the
actual trajectories for all the 4 robots in the mission, while in
Figure~\ref{fig:distances4} we show the distances between the UAVs during the
mission. The safety constraints and the flocking constraints
(according to the adjancency matrix $A_d$) were satisfied during the entire
mission.

To show reliability and robustness to parameter changes, we report in
Table~\ref{tab:exp} the results obtained from other four experiments in the
forest, with $N=3$ UAVs. Since the system is intrinsically
chaotic it does not make sense to compare the results obtained, even if, as we
can expect, a relation between $\beta^D$ and the time to reach the goal can be
inspected from Table~\ref{tab:exp}. Nevertheless, through the table we want to
show the repeatability of the experiments and the robustness to parameter
changes. The parameters for each experiment were set as follows: 
$\#$experiment $=[1,2,3,4]$, $N=[3,3,3,3]$, $\delta_i = [0.3,0.3,0.3,0.3]$~(m),
$\delta^o = [0.05,0.05,0.05,0.05]$~(m) $r_{s,i} = [6,6,6,6]$~(m),
$d_1=d_3=[0.5,0.5,0.5,0.5]$~(m), $d_2=d_4=[1,1,1,1]$~(m),
$\beta_i^D=[2.0,2.0,1.0,0.6]$, $\Gamma_{ij} = [12,10,10,10]$~(m), $d_u =
[0.75,0.75,0.75,0.75]$~(m), $\epsilon_p = \epsilon_o =
[1.052,1.052,1.010,1.052]$, and the adjacency matrix that defines the flocking
constraints is the same for all the four experiments, $A_d =
\left[\begin{smallmatrix} 0&0&1\\0&0&1\\1&1&0  \end{smallmatrix}\right].$ 

The videos of the experiments discussed in this section can be found in the
multimedia material.

\begin{table}
    \centering
    \renewcommand{\tabcolsep}{0.35cm} 
    \begin{tabular}{|c|c|c|c|c|}
    \hline
      UAV & UAV1 & UAV2 & UAV3 & UAV4 \\ \hline
      avg. speed (m/s) &0.361 & 0.329  & 0.373   & 0.370 \\ \hline
      max. speed (m/s) &1.256 &  1.224 & 1.427   & 1.095\\ \hline
      time (s)         &150.61 & 174.95 & 159.20  & 174.20 \\ \hline
    \end{tabular}
    \vspace{10pt}
    \caption{Average speed, maximum speed, and time to reach the goal of each 
    UAV in the experiment from Figure~\ref{fig:map}.} \label{tab:vel1}
\end{table}

\begin{figure}[t]
 \centering
 \setlength{\tabcolsep}{0.05em}
 \begin{tabular}{cc}
   \includegraphics[width=0.45\linewidth]{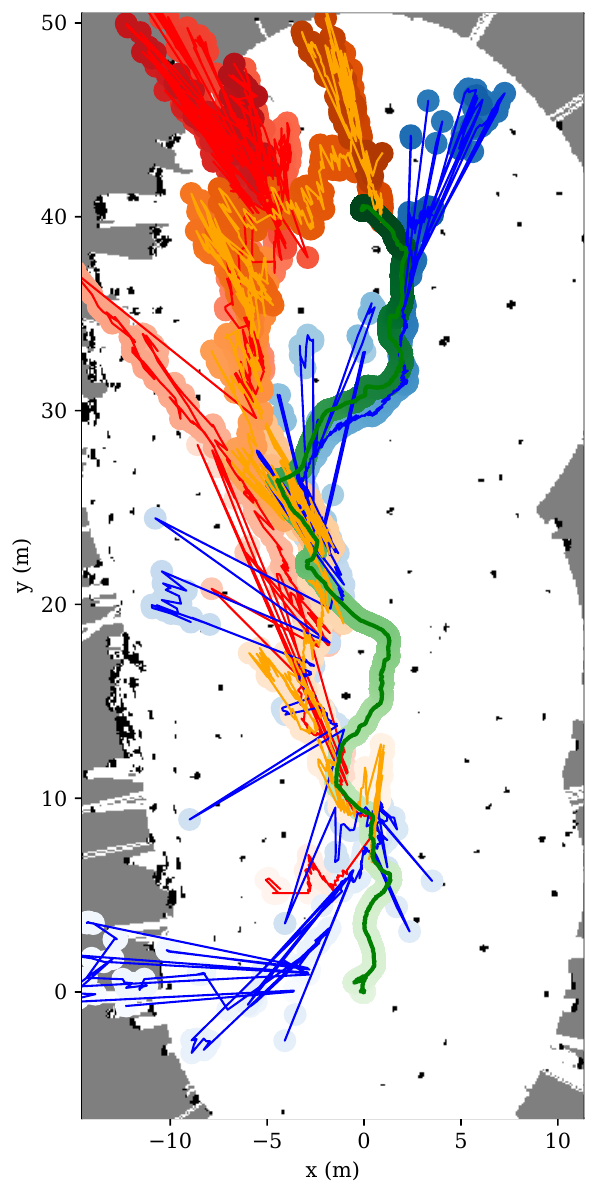} &
  \includegraphics[width=0.4415\linewidth]{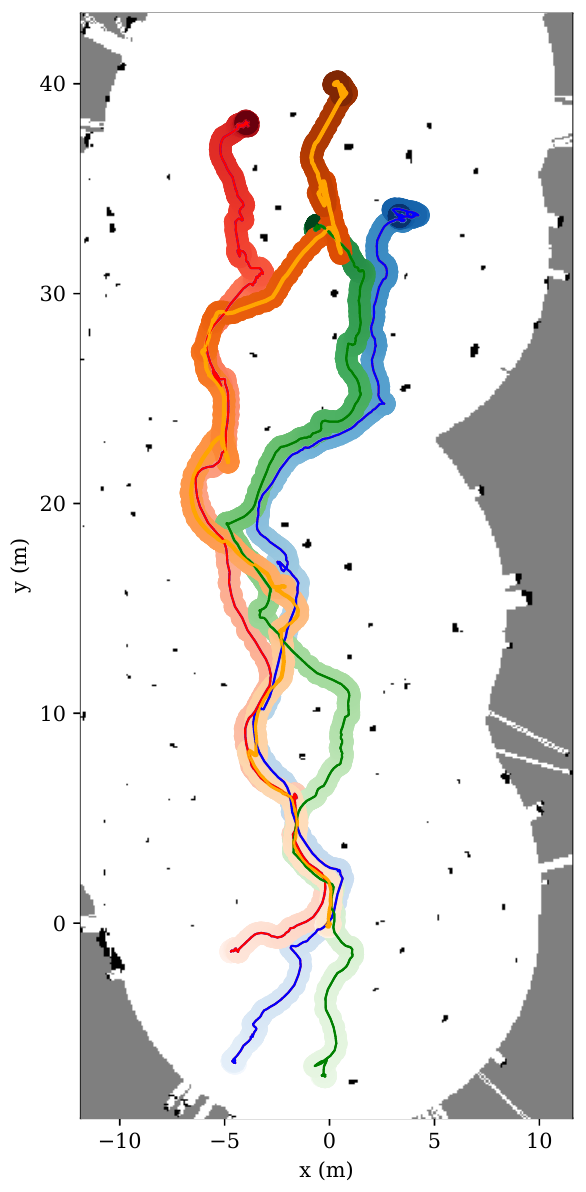}\\
   (a) & (b)
   \end{tabular}
  \caption{Experiment in the forest. In (a) the trajectory of UAV~$2$ is
  depicted in green. The positions measured through UVDAR by UAV~$2$ are
  depicted in orange (UAV~$1$), blue (UAV~$3$), and red (UAV~$4$). The map built
  by UAV~$2$ by means of Hector mapping is also represented, in particular the
  free space is white, obstacles are black and the unexplored area is grey. In
  (b) we depicted the actual trajectories obtained in post-process phase through
  transformation of odometries to a common reference frame. The map was built by
  UAV~$1$. }
  \label{fig:map}
\end{figure}

\begin{figure}
    \centering
   \includegraphics[width=0.9\linewidth]{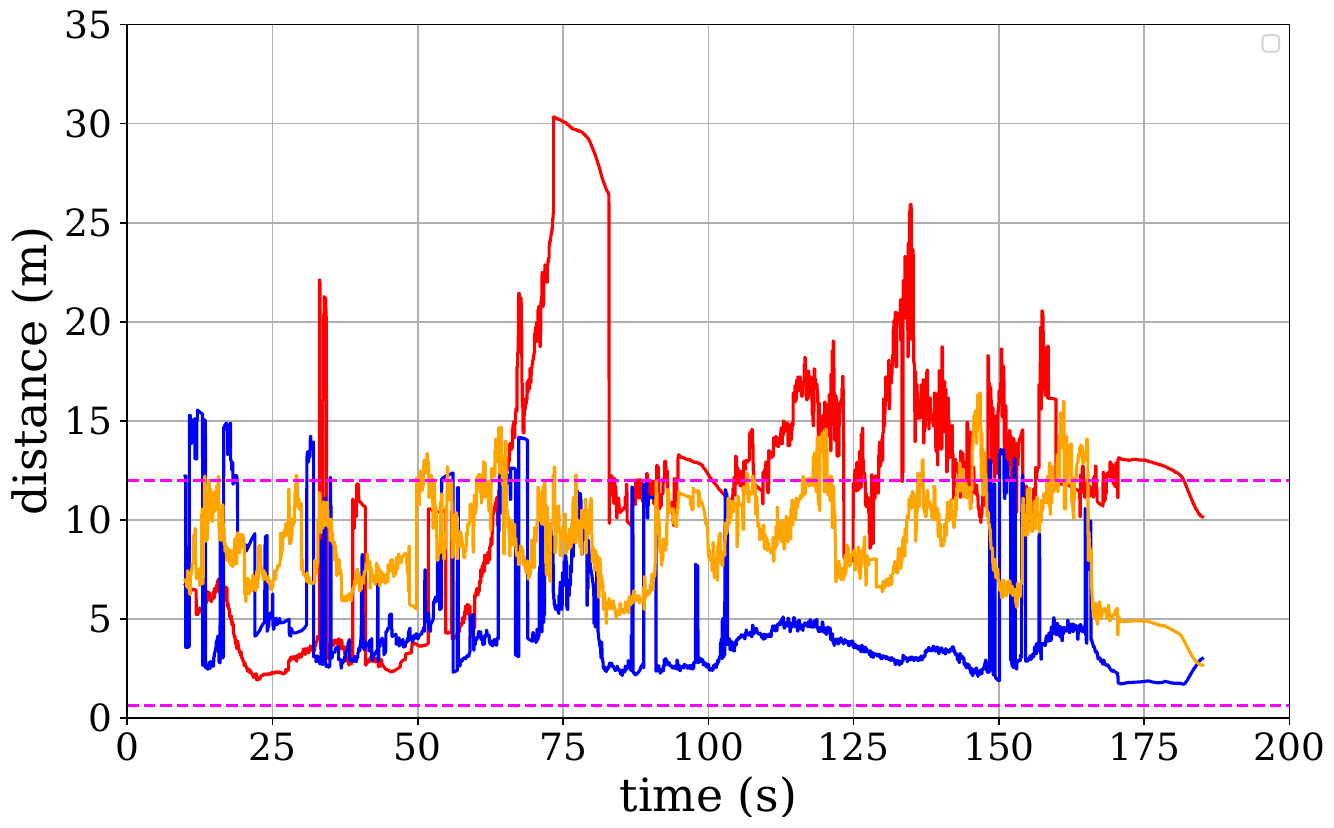}
    \caption{Distances measured by UVDAR system on UAV 2 from UAV 1 (blue), UAV
    3 (orange), and UAV 4 (red). The flocking and safety constraints are in
    magenta.}
    \label{fig:distancesuvdar}
\end{figure}

\begin{figure}
    \centering
   \includegraphics[width=0.9\linewidth]{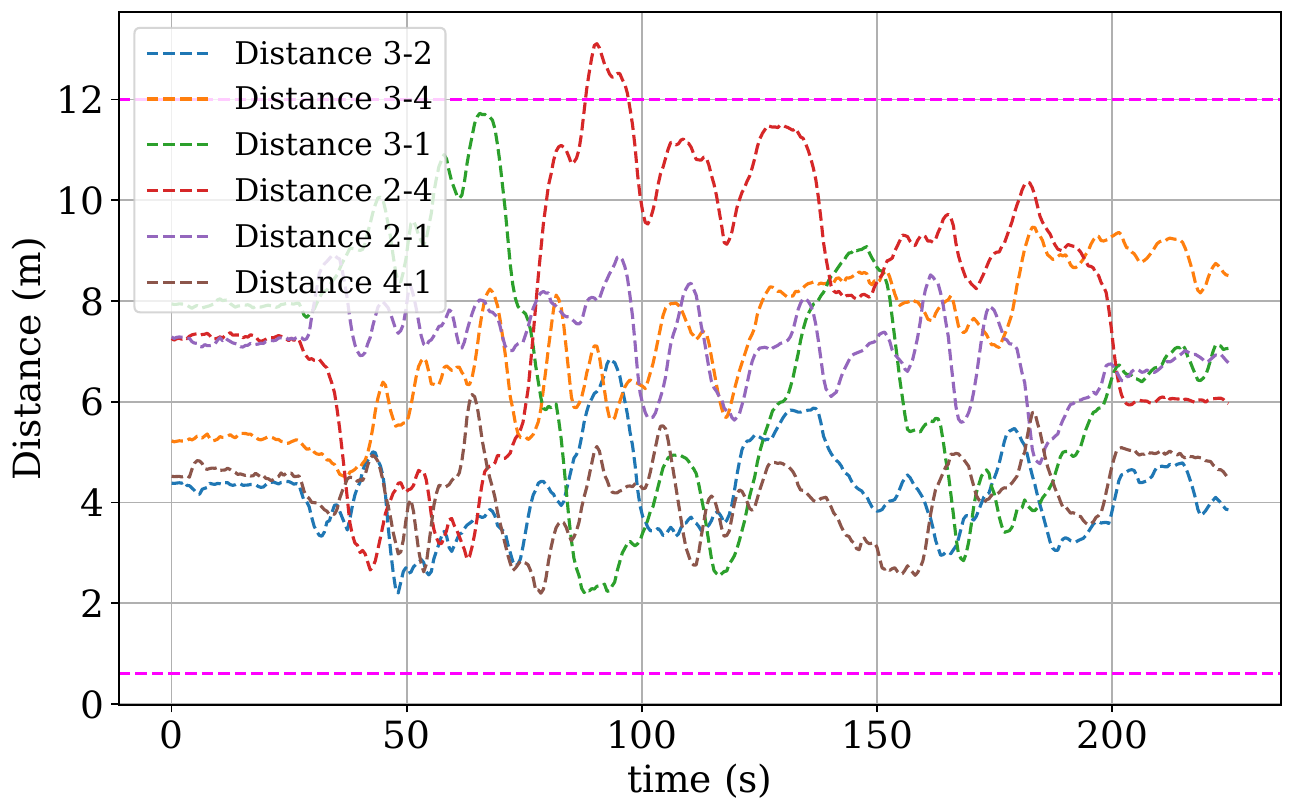}
    \caption{Distances between UAVs computed based on odometries transformed
    to a common reference frame obtained through alignment of maps of
    individual UAVs in a post-prossessing phase. The flocking and safety
    constraints are in magenta. According to $A_d$ the flocking
    constraints are not active for the pair of robots $2$-$4$ and $1$-$3$.}
    \label{fig:distances4}
\end{figure}

\begin{table}
    \centering
    \renewcommand{\tabcolsep}{0.55cm} 
    \begin{tabular}{|c|c|c|c|}
    \hline
      &UAV1 & UAV2 & UAV3 \\ \hline
      \multicolumn{4}{|c|}{Exp. 1 ($\beta^D=2.0$, $\epsilon_{o(p)}=1.052$, $\Gamma_{ij}=12$)}\\ \hline
      avg. speed (m/s)& 0.309 & 0.249 & 0.263\\ \hline
      max. speed (m/s)& 1.210 & 1.255 & 1.226\\ \hline
      time (s) & 278.43&280.54&245.12 \\ \hline
      \multicolumn{4}{|c|}{Exp. 2 ($\beta^D=2.0$, $\epsilon_{o(p)}=1.052$, $\Gamma_{ij}=10$)}\\ \hline
      avg. speed (m/s)& 0.447& 0.415& 0.382\\ \hline
      max. speed (m/s)& 1.150& 1.170& 1.230\\ \hline
      time (s) & 148.00 & 163.81 & 159.91 \\ \hline
      \multicolumn{4}{|c|}{Exp. 3 ($\beta^D=1.0$, $\epsilon_{o(p)}=1.010$, $\Gamma_{ij}=10$)}\\ \hline
      avg. speed (m/s)& 0.661& 0.497& 0.740\\ \hline
      max. speed (m/s)& 1.229& 1.245& 1.374\\ \hline
      time (s) & 85.16 & 117.16 & 79.20 \\ \hline
      \multicolumn{4}{|c|}{Exp. 4 ($\beta^D=0.6$, $\epsilon_{o(p)}=1.052$, $\Gamma_{ij}=10$)}\\ \hline
      avg. speed (m/s)& 0.605& 0.651& 0.794\\ \hline
      max. speed (m/s)& 1.175& 1.281& 1.310\\ \hline
      time (s) & 82.49 & 80.33 & 62.30\\ \hline
    \end{tabular}
    \vspace{10pt} \caption{Comparison of average speed, maximum speed, and time
    to reach the goal in four different experiments in the forest, showing
    repeatability and robustness to parameters changes.} \label{tab:exp}
\end{table}

\section{Conclusions} \label{sec:Conclusions} 

In this paper, we presented a novel distributed algorithm to swarm reliably and
effectively in dense cluttered environments represented by a forest. We provide
sufficient conditions for safety and proximity maintenance. We propose an
adaptive strategy to compensate for tracking errors and measurement
uncertainties. We compare our approach with state-of-the-art algorithms, and
finally we validate our solution through simulations and experiments in the
field and in dense environment such as a forest. Our approach presents some
limitations; the algorithm is limited to work on a plane, i.e., the reference in
the altitude for each robot is constant and equal. The convergence towards
the goal depends on both the environment of operation and the uncertainties
taken into account. Providing to the algorithm a higher level re-planner can be
beneficial to enhance performance. Future research directions will be focused on
an effective 3D extension of the Lloyd algorithm and to further increase the
agility of the swarm.

\begin{acronym}
  \acro{CNN}[CNN]{Convolutional Neural Network}
  \acro{IR}[IR]{infrared}
  \acro{GNSS}[GNSS]{Global Navigation Satellite System}
  \acro{MOCAP}[mo-cap]{Motion capture}
  \acro{MPC}[MPC]{Model Predictive Control}
  \acro{MRS}[MRS]{Multi-robot Systems Group}
  \acro{ML}[ML]{Machine Learning}
  \acro{MAV}[MAV]{Micro-scale Unmanned Aerial Vehicle}
  \acro{UAV}[UAV]{Unmanned Aerial Vehicle}
  \acro{UV}[UV]{ultraviolet}
  \acro{UVDAR}[\emph{UVDAR}]{UltraViolet Direction And Ranging}
  \acro{UT}[UT]{Unscented Transform}
  \acro{RTK}[RTK]{Real-Time Kinematic}
  \acro{ROS}[ROS]{Robot Operating System}
  \acro{wrt}[w.r.t.]{with respect to}
  \acro{FEC}[FEC]{formation-enforcing control}
  \acro{DIFEC}[DIFEC]{distributed formation-enforcing control}
  \acro{LIDAR}[LiDAR]{Light Detection And Ranging}
  \acro{UWB}[UWB]{Ultra-wideband}
\end{acronym}

\bibliographystyle{IEEEtran}
\bibliography{reference}

\end{document}